\def\final{1}
\titleformat{\subsubsection}[runin]
{\normalfont\normalsize\bfseries\filcenter}{\thesubsubsection.}{1 ex}{}
\newcommand{\mynote}[1]{\marginpar{\tiny\sf #1}}
\newcommand{\mynote}[1]{}
\declaretheoremstyle[qed=$\diamond$,headpunct={ --- },headfont=\normalfont\itshape]{myremark}
\declaretheoremstyle[bodyfont=\normalfont]{mydefinition}
\declaretheorem[name=Theorem]{Thm}
\declaretheorem[within=section,name=Lemma]{Lem}
\declaretheorem[sibling=Lem,name=Definition,style=mydefinition]{Def}
\declaretheorem[sibling=Lem,name=Example,style=mydefinition]{Ex}
\declaretheorem[sibling=Lem,name=Notation,style=mydefinition]{Not}
\newcommand{\compresslist}{\setlength{\itemsep}{1pt}
\setlength{\parskip}{0pt}
\setlength{\parsep}{0pt} }
\renewcommand{\eqref}[1]{({\ref{eq:#1}})}
\newcommand{\iprod}[2]{\left\langle {#1}\, , \, {#2}\right\rangle}
\newcommand{\C}{\mathbb{C}}
\newcommand{\K}{\mathbb{K}}
\newcommand{\N}{\mathbb{N}}
\newcommand{\R}{\mathbb{R}}
\newcommand{\calF}{\mathcal{F}}
\newcommand{\calI}{\mathcal{I}}
\newcommand{\calS}{\mathcal{S}}
\newcommand{\mX}{\mathbf{X}}
\newcommand{\va}{\mathbf{a}}
\newcommand{\vb}{\mathbf{b}}
\newcommand{\rk}{\operatorname{rank}}
\newcommand{\lspan}{\operatorname{span}}
\newcommand{\diag}{\operatorname{diag}}
\newcommand{\rowspan}{\operatorname{rowspan}}
\DeclareMathOperator*{\Id}{I}
\DeclareMathOperator*{\Van}{V}
\begin{document}

\title{Dual-to-Kernel Learning with Ideals}

\author{Franz J. Király \thanks{Department of Statistical Science, Univerity College London, and MFO
\url{f.kiraly@ucl.ac.uk}} \and
Martin Kreuzer \thanks{Universität Passau, \url{Martin.Kreuzer@uni-passau.de}} \and
Louis Theran\thanks{Inst. Math., AG Diskrete Geometrie, Freie Universität Berlin, \url{theran@math.fu-berlin.de}}}

\date{}
\maketitle

\begin{abstract}
In this paper, we propose a theory which unifies kernel learning and symbolic algebraic methods. We show that both worlds are
inherently dual to each other, and we use this duality to combine the structure-awareness of algebraic methods with the efficiency and generality
of kernels. The main idea lies in relating polynomial rings to feature space, and ideals to manifolds, then exploiting this generative-discriminative duality on kernel matrices. We illustrate this by proposing two algorithms, IPCA and AVICA, for simultaneous manifold and feature learning, and test their accuracy on synthetic and real world data.
\end{abstract}

\section{Introduction}\label{Sec:intro}
In this paper, we propose a learning theory which is the {\bf synthesis of kernel and symbolic algebraic methods}, by exposing inherent dualities between them. We use this duality to combine the structure-awareness of algebraic methods with the efficiency and generality of kernels. Since their invention by Boser, Guyon and Vapnik~\cite{BoserVapnik92,Vapnik95}, {\bf kernel methods} have had a fundamental impact on the fields of statistics and machine learning. The major appeal of using kernel methods for learning consists in using the kernel trick, first proposed by Aizerman, Braverman and Rozonoer~\cite{Aizerman64}, which allows to make otherwise costly computations in the feature space implicit and thus highly efficient for a huge variety of learning tasks - see e.g.~\cite{Scholkopf02, Shawe-Taylor04}. However, the major advantage of kernel methods is also their major drawback: since kernels implicitize feature space computations, the learnt model is implicit as well; in most scenarios, though kernels perform excellently, it is indeed a principal open question what it is that kernels learn - \emph{a question to which we can provide an answer through duality with ideals}. On the other hand, {\bf symbolic-algebraic methods} are inherently structural, as they yield explicit and compact representations of the data, as so-called ideals, with the major advantage of being directly interpretable. The seminal Buchberger-Möller algorithm~\cite{Moller82} allows to transform one representation into a different, easier and sparser one. One major issue of the Buchberger-Möller algorithm is that it is numerically unstable and therefore not applicable in noisy scenarios - this has been addressed by a class of numerical algorithms surrounding the Approximate Vanishing Ideal (AVI) method~\cite{AVI,Sauer07}. While these algorithms offer attractive and explicit representations, the major issue with symbolic methods preventing broad applicability is their exponential (or higher) complexity, and model selection issues - \emph{which we can considerably reduce through dual kernelization}. In the intersection of symbolic algebra and kernel method, we propose general tools and two algorithms, IPCA and AVICA, which simultaneously can learn generative information from the data manifold and discriminative features; more generally, we argue that the kernel-ideal duality translates generative and discriminative tasks in the kernel world directly to discriminative and generative tasks in the algebra world, which allows to combine the advantages of either while avoiding the disadvantages of both. We therefore expect our findings to have a considerable impact on the fields
of learning, statistics, algebra, and the interaction between those.

\section{Ideal-Kernel-Duality}\label{Sec:ideals}

\subsection{The Polynomial Ring as Dual Kernel Space}
We introduce the main objects we are relating through duality.
We start by defining polynomial kernels, the main kernel-type objects involved
in the duality presented here. Later we will explain how to treat general kernels. In this paper, $\K$ will be one of the fields
$\R$ or $\C$.

\begin{Def}
Let $\theta \in (0,1)$ be a fixed real number.
Slightly different from the usual definition, we denote\\
by $k_d$ the homogenous polynomial kernel function\\
$k_d:\K^n\times \K^n\rightarrow \R,\; (x,y)\mapsto \theta^d\cdot
\langle x,y\rangle^d$, and\\
by $k_{\le d}$ the inhomogenous polynomial kernel function\\
$k_{\le d}:\K^n\times \K^n\rightarrow \R,\; (x,y)\mapsto (\theta\cdot \langle x,y\rangle+1)^d$.
\end{Def}
The usual convention for the kernel function is obtained after dividing
by $\theta^d$. Since $\theta$ is chosen arbitrarily in $(0,1)$, no qualitative
change is introduced by our convention. It is however, as we will show,
the more natural one. On the algebra side, the main objects linked via the duality are vector
spaces of polynomials.

\begin{Not}
We denote
by $\mX = (X_1,\ldots, X_n)$ a vector of coordinate variables,
by $\K[\mX]_{d}$ the $\K$-vector space of homogeneous polynomials of degree~$d$
in~$\mX$,
by $\K[\mX]_{\le d}$ the $\K$-vector space of all (homogeneous or inhomogeneous)
polynomials of degree at most~$d$ in~$\mX$,
by $\K[\mX] = \K[\mX]_{0}\oplus \K[\mX]_{1}\oplus \cdots$ the ring of all polynomials
in~$\mX$, and
by $\mX^\va$, for $\va \in \N^n$, the monomial $X_1^{\va_1}\cdot \dots \cdot X_n^{a_n} \in \K[\mX]_d$ where $d = a_1+\dots+a_n$.
\end{Not}

The dimension of $\K[\mX]_{\le d}$ is ${n+d \choose d}$,
and the dimension of $\K[\mX]_{d}$ is ${n+d-1\choose d}$. The dimension of $\K [\mX]$ is infinite. The ring $\K [\mX]$ is dual to the vector space of kernel decision functions in the following way.

\begin{Thm}\label{Thm:dual}
Let $d\ge 0$, let $m\ge \dim \K [\mX]_*$, where $*$ can denote $d$ or $\le d$, and
let $y_1,\dots, y_m\in \K^n$ be generic. Then we have $\K [\mX]_* = \lspan \{k_*(y_i,\mX), 1\le i\le m\}.$
\end{Thm}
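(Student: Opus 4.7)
The plan is to reduce the statement to showing that a certain algebraic matrix has generically full rank, and to verify this by exhibiting a single good configuration via a polarization-style argument.

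First I would expand $k_*(y, \mX)$ in the monomial basis of $\K[\mX]_*$ using the binomial/multinomial theorem. In the homogeneous case,
\[
k_d(y,\mX) = \theta^d \sum_{|\va|=d} \binom{d}{\va} y^\va\, \mX^\va,
\]
and in the inhomogeneous case,
\[
k_{\le d}(y,\mX) = \sum_{k=0}^d \binom{d}{k}\theta^k \sum_{|\va|=k} \binom{k}{\va}\, y^\va\, \mX^\va .
\]
In particular $k_*(y_i,\mX) \in \K[\mX]_*$, so one inclusion is automatic, and writing each $k_*(y_i,\mX)$ as a vector of coefficients gives an $m\times N$ matrix $M(y_1,\dots,y_m)$ (with $N=\dim_\K \K[\mX]_*$) whose entries are polynomials in the scalar coordinates of the $y_i$.

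Next, the span condition $\lspan\{k_*(y_i,\mX)\} = \K[\mX]_*$ is equivalent to $\rk M(y_1,\dots,y_m) = N$, which, since $m\ge N$, is equivalent to the non-vanishing of at least one $N\times N$ minor $\det M_J$. Each minor is a polynomial in the $mn$ coordinates of $y_1,\dots,y_m$, so the full-rank locus is Zariski-open. It therefore suffices to show that at least one such minor is a nonzero polynomial, equivalently, that there exists some choice $(y_1,\dots,y_N) \in (\K^n)^N$ for which $k_*(y_1,\mX),\dots,k_*(y_N,\mX)$ form a basis of $\K[\mX]_*$; the desired genericity then follows.

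For the existence, I would argue by duality: assume $L : \K[\mX]_* \to \K$ is a linear form, written $L(f)=\sum_\va c_\va [\mX^\va]f$, such that $L(k_*(y,\mX))=0$ for every $y\in\K^n$. Substituting the expansions above exhibits $y\mapsto L(k_*(y,\mX))$ as a polynomial in the entries of $y$ whose coefficient of $y^\va$ is $c_\va$ times a nonzero scalar (a product of $\theta$ and binomial factors). Identical vanishing of a polynomial in $y$ forces each coefficient to be zero, hence $c_\va=0$ for all $\va$, so $L=0$. This shows $\{k_*(y,\mX):y\in\K^n\}$ linearly spans $\K[\mX]_*$; choosing $N$ of these vectors that form a basis establishes a point where the desired minor is nonzero.

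The main subtlety, and essentially the only one, is in the inhomogeneous case: $k_{\le d}(y,\mX)$ mixes homogeneous components of different degrees, so one must verify that polarization still separates the $c_\va$. This works because the monomials $y^\va$ for distinct multi-indices $\va$ (of any degree $\le d$) are linearly independent as functions of $y\in\K^n$, so vanishing of $L(k_{\le d}(y,\mX))$ as a polynomial in $y$ still forces every coefficient $\binom{d}{|\va|}\theta^{|\va|}\binom{|\va|}{\va} c_\va$ to be zero, and all of these prefactors are nonzero by our hypothesis $\theta\in(0,1)$. The two cases then follow from the same argument.
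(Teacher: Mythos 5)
Your proposal is correct. It follows the same outline as the paper's proof --- the inclusion $\lspan\{k_*(y_i,\mX)\}\subseteq\K[\mX]_*$ is immediate, and the reverse inclusion is reduced to generic linear independence of $\dim\K[\mX]_*$ of the polynomials $k_*(y_i,\mX)$ --- but the paper stops at that reduction, simply asserting that the linear independence ``follows from genericity of $y_i$.'' Your argument supplies exactly the content that assertion is missing: genericity only yields the conclusion once one knows that the full-rank locus is a \emph{nonempty} Zariski-open subset of $(\K^n)^m$, and nonemptiness is what your polarization step proves, by showing that a linear functional $L(f)=\sum_\va c_\va[\mX^\va]f$ annihilating every $k_*(y,\mX)$ must vanish because the coefficient of $y^\va$ in $L(k_*(y,\mX))$ is $c_\va$ times the nonzero scalar $\binom{d}{|\va|}\theta^{|\va|}\binom{|\va|}{\va}$. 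You also correctly isolate the one genuinely delicate point, namely that in the inhomogeneous case the mixing of degrees is harmless precisely because none of these prefactors vanishes for $\theta\in(0,1)$. The only hypothesis you use implicitly is that $\K$ is infinite (so that identical vanishing of a polynomial function on $\K^n$ forces all coefficients to vanish), which holds here since $\K\in\{\R,\C\}$.
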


\begin{proof}
Linear independence of up to $\dim \K [\mX]_*$ of the $k_*(y_i,\mX)$ follows from genericity of $y_i$. Since $\lspan \{k_*(y_i,\mX), 1\le i\le m\}\subseteq \K [\mX]_*$, this yields the claim.
\end{proof}

\subsection{Duality of Kernel and Ring Scalar Products}
Let $d\ge 0$, and let~$\ast$ denote~$d$ or $\le d$.
Theorem~\ref{Thm:dual} implies that $\K [\mX]_*$ is dual to the feature space
of the kernel~$k_*$. By passing to the limit, this implies that $\K [\mX]$
contains the dual of any feature space. Explicitly, this is seen as follows: consider the usual feature
map $\phi_*: \K^n\rightarrow \calF_*$, where $\calF_*$ is the feature space.
Elementary computations, such as in section~2.1 or problems~2.6.2-3 of~\cite{Scholkopf02}, show that the feature map
can be explicitly identified as
$\phi_*: x\mapsto (\gamma_\va\cdot x^\va\;:\;\va\in\N^n,\; a_1+\dots+a_n=*),$
with $\gamma_\va = \sqrt{\theta^{d}\cdot \binom{d}{\va}}$.
By counting the number of distinct $\va$, we see that $\dim\calF_* = \dim \K[\mX]_*$.
This can be made more explicit by interpreting polynomials in $\K[\mX]_*$
as elements of the dual of~$\calF_*$, i.e., as functions $\calF_*\rightarrow \K$.
Namely, for a polynomial $f\in\K[\mX]_*$ with $f=\sum_{\va\in\N^n} c_\va \mX^\va$,
write $f^\vee:=(c_\va/\gamma_\va,\va\in\N^n)$.
One checks that $\langle f^\vee, \phi_*(p)\rangle = f(p)$ for all $p\in\K^n$.
Thus we can identify~$\calF_*$ with the dual polynomial ring $\K[\mX]_*^\vee$.
Since the latter is finite dimensional, it is self dual, so there is a canonical identification $\calF_*^\vee\cong \K [\mX]_*$.

We use this identification to transfer the canonical scalar product on~$\calF_*$
to its dual~$\K[\mX]_*$ in the natural way. Namely, in order to be compatible
with~$\phi_*$, it needs to be the unique scalar product on~$\K[\mX_*]$
such that, for $x,y\in \K^n$, the reproducing property
$\iprod{k_*(x,\mX)}{k_*(\mX,y)} = k_*(x,y)$
holds.
To achieve this, the factor~$\gamma_\va$ occurring in~$f^\vee$ above must be corrected for.
Consequently, an explicit description of the scalar product on~$\K[\mX]_*$ is given
as follows: Fix $d\ge 0$, and let $\va,\vb\in \N^n$ be exponent vectors. Define $\iprod{\mX^\va}{\mX^{\vb}} = 0$ if $\va\neq \vb$, and
\begin{equation}\label{Eq:inner-product}
\iprod{\mX^\va}{\mX^\va} = \gamma_\va^{-2} = \theta^{-d}\cdot \binom{d}{\va}^{-1} = \theta^{-d}\cdot \frac{a_1!\cdot\dots\cdot a_n!}{d!},
\end{equation}
and then extend this bi- or sesquilinearly to all of~$\K[\mX]$.\\
Theorem~\ref{Thm:dual} then implies that, for $f,g\in\K[\mX]_*, p\in \K^n$,
the following equalities hold:
\begin{align}\label{Eq:eval-ip}
f(p) &= \iprod{f}{k_*(\mX, p)}\\
\iprod{fg}{k_*(\mX, p)^2} &=
\iprod{f}{k_*(\mX, p)}\iprod{g}{k_*(\mX, p)}\label{Eq:eval-prod}.
\end{align}
By applying Theorem~\ref{Thm:dual} and properties of the symmetric outer product, we get
an important consequence of Equation~\ref{Eq:eval-prod}, namely that the scalar
product is multiplicatively absorbing for orthogonality. That is, letting
$f_1,f_2,g_1,g_2\in \K [\mX]$ such that $\langle f_i,g_j\rangle = 0$ for all $i,j$,
we have
\begin{align}\label{Eq:orthdeg}
\iprod{f_1f_2}{g_1 g_2} &= 0.
\end{align}
Note that in the usual convention for the polynomial kernel, similar equalities are still valid, but much less concise to express.
The duality above is an algebraic analogue of the theory of reproducing kernel Hilbert spaces. The associated Hilbert space is the space of polynomial functions $f:\K^n\rightarrow \K$, which can be identified with $\K [\mX]_*$ by an additional dualization; phrased in algebraic terms, by replacing the evaluation homomorphism $f$ with the corresponding symbolic polynomial. Note that the equations~\ref{Eq:eval-ip} and~\ref{Eq:eval-prod} could also be obtained combining Riesz representation, or the reproducing property of $k_*$, with this identification, compare e.g.~section 2.2.3 in~\cite{Scholkopf02}.Equality~\ref{Eq:orthdeg}, on the other hand, is purely obtained from algebra. The next section will also go beyond what could be reached from usual RKHS duality alone.

\subsection{Ideals are Dual to Manifolds}
Next we show that ideals - a classical concept in algebra - are the proper objects
to dualize manifolds, in the same way as the polynomial ring dualizes feature space.

\begin{Def}\label{Def:ideal}
An \emph{ideal} is a linear subspace $\calI\subseteq \K[\mX]$ which also absorbs multiplication, that is, which satisfies $f\cdot g \in \calI$ for all $f\in\K[\mX],
g\in \calI$. For $d\ge 0$, we let $\calI_{\le d}:=\calI\cap \K[\mX]_{\le d}$.
\end{Def}

While an ideal is in general infinite dimensional as a $\K$-vector space,
Hilbert's basis theorem says that all ideals admit a \emph{finite} set of additive-multiplicative generators. As for vector spaces, such sets of generators need not be unique. An important class of ideals is given as follows.

\begin{Ex}\label{Ex:vanishing-ideal}
Let $\calS\subseteq \K^n$. Then the $\K$-vector space of polynomials
$\Id(\calS) := \{f\in \K[\mX] :\text{$f(s) = 0$ for all $s\in \calS$} \}$
is an ideal. It is called the \emph{vanishing ideal of $\calS$}.
\end{Ex}
While $\calS$ can be in principle any subset of~$\K^n$, we will be mainly
concerned with the case where~$\calS$ is a manifold. In this case,
the vanishing ideal $\Id (\calS)$ is the dual of the manifold $\calS$
in the following precise sense which is an analogue to Theorem~\ref{Thm:dual}.

\begin{Thm}\label{Thm:dual-manifold}
Let $\calS\subset \K^n$ be a manifold. It holds that
$
\Id(\calS)\cap\K[\mX]_* = \lspan \{k_*(s,\mX),\; s\in \calS\}^{\perp}.
$
\end{Thm}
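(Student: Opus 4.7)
The plan is to prove this by a direct application of the reproducing property established in the previous subsection, namely Equation~\ref{Eq:eval-ip}: for any $f\in\K[\mX]_*$ and $p\in\K^n$, one has $f(p) = \iprod{f}{k_*(\mX, p)}$. This identity converts the pointwise vanishing condition defining $\Id(\calS)$ into an orthogonality condition against the family $\{k_*(\mX, s)\}_{s\in\calS}$.

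More concretely, I would argue as follows. Fix $f\in\K[\mX]_*$. By definition, $f\in\Id(\calS)$ if and only if $f(s)=0$ for every $s\in\calS$. Applying Equation~\ref{Eq:eval-ip} to each such $s$, this is equivalent to $\iprod{f}{k_*(\mX,s)}=0$ for every $s\in\calS$. By sesquilinearity (or bilinearity in the real case) of the scalar product, this last condition is equivalent to $f$ being orthogonal to every element of $\lspan\{k_*(s,\mX),\;s\in\calS\}$, i.e., $f\in\lspan\{k_*(s,\mX),\;s\in\calS\}^\perp$. Note that, since the polynomial kernel $k_*$ is symmetric in its two arguments, $k_*(s,\mX)$ and $k_*(\mX,s)$ coincide as polynomials in $\mX$, so no adjustment is needed in passing between the notations used in Equation~\ref{Eq:eval-ip} and in the statement.

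The main technical point worth being careful about is that Equation~\ref{Eq:eval-ip} is stated for $f\in\K[\mX]_*$, and we really do need $f$ to lie in the appropriate graded or filtered piece in order to apply it, which is why the statement restricts attention to $\Id(\calS)\cap\K[\mX]_*$ rather than all of $\Id(\calS)$. Apart from this, no step requires the manifold hypothesis on $\calS$ — the argument works for an arbitrary subset of $\K^n$ — so the assumption that $\calS$ is a manifold plays no role in the proof and is only a contextual remark aligning this result with the intended geometric interpretation.

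I do not anticipate a hard step: once the reproducing identity~\ref{Eq:eval-ip} is available, the result is essentially a one-line translation between vanishing and orthogonality, entirely analogous to how a subspace is determined by the functionals that vanish on it. The substantive work was already carried out in establishing the scalar product on $\K[\mX]_*$ and verifying the reproducing property in the previous subsection.
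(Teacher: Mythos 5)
Your proposal is correct and matches the paper's own (one-line) proof, which likewise derives the statement directly from the definition of $\Id(\calS)$ and Equation~\ref{Eq:eval-ip}; you have merely spelled out the translation between vanishing and orthogonality in full. Your side remark that the manifold hypothesis is not actually used is also accurate.
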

\begin{proof}
This follows from the definition of~$\Id (\calS)$ and Equation~\ref{Eq:eval-ip}.
\end{proof}

Theorem~\ref{Thm:dual-manifold} relates manifolds to ideals via kernel duality.
Intuitively, it says that the manifold~$\calS$ corresponds to a linear subspace
of feature space. This is a kernelized version of the usual algebra-geometry duality
and and relates the discriminative description through decision functions $k_*(s,\mX)$ to the generative description by~$\calS$.

\subsection{Interpolation Space and Kernel Border Bases}
In this section we reveal a further duality between certain kernel matrices and ideals.
First we introduce the following notation for kernel matrices.

\begin{Not}
Let $X\in\K^{N\times n}$, let $Y\in \K^{D\times n}$, and let
$x_1,\dots, x_N, y_1,\dots, y_D$ be the rows of~$X$ and~$Y$, resp.
For a kernel $k:\K^n\times \K^n\rightarrow \R$, we denote by~$k(X,Y)$
the $(N\times D)$-matrix which has the number $k(x_i,y_j)$ as its entry
in position $(i,j)$.
\end{Not}

The concept of an interpolation space is motivated by the duality
in Theorem~\ref{Thm:dual-manifold} and is the orthogonal to an ideal.

\begin{Def}
Let $\calS\subseteq \K^n$, and let $\calI = \Id (\calS)$.
The \emph{interpolation space} of~$\calS$ is the vector space $\K [\calS]:=\calI^\perp$,
where the orthogonal is taken w.r.t.~the scalar product defined above.
Given $d\ge 0$, we also write $K[\calS]_{\le d}$ for $\calI_{\le d}^\perp$.
\end{Def}

Intuitively, the interpolation space gives a canonical basis for the space of
functions on~$\calS$, orthogonal to those vanishing on~$\calS$. It is a fixed choice of representatives for the factor ring $\K[\mX]/\Id(\calS)$ usually defined in algebra.
The results of the previous sections yield the following duality statements
between the interpolation space and polynomial kernel matrices.

\begin{Thm}\label{Thm:dual-interp}
Let $\calS\subseteq \K^n$, let $x_1,\dots, x_N\in\calS$ be generic, and
let $X\in \K^{N\times n}$ be the matrix which has the~$x_i$ as columns.
Let $Y\in\K^{D\times n}$ be generic. Denote $K_d:=k_{\le d}(X,Y)$. Then
the following claims hold:
\begin{description}
\compresslist
\item[(i)] $\rk K_d = \min\left(\dim \K [\calS]_{\le d}, N, D\right)$.
\item[(ii)] It holds that $k_{\le d}(\mX,Y)\cdot \rowspan K_d \subseteq \K [\calS]_{\le d}$.
Equality holds if and only if $N,D\ge \dim \K [\calS]_{\le d}$.
\item[(iii)] For $\alpha\in \K^D$, it holds that
$k_{\le d}(\mX,Y)\cdot \alpha\in \Id (\calS)$ only if $K_d\cdot \alpha = 0$.
The converse is true if and only if $N,D \ge \dim \K [\calS]_{\le d}$.
\end{description}
(Notice the difference between the $n$-tuple of variables $\mX$ and the data matrix~$X$.)
\end{Thm}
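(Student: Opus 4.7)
My plan is to factor the linear map $K_d\colon \K^D\to\K^N$ through the interpolation space $\K[\calS]_{\le d}$ and to read off all three parts from this factorization. Writing $f_\alpha:=k_{\le d}(\mX,Y)\cdot\alpha=\sum_j\alpha_j k_{\le d}(\mX,y_j)\in\K[\mX]_{\le d}$, the reproducing property in equation~\ref{Eq:eval-ip} gives $(K_d\alpha)_i=f_\alpha(x_i)$. Since the $x_i$ lie in $\calS$, evaluation at them factors through the orthogonal projection $\pi\colon \K[\mX]_{\le d}\twoheadrightarrow\K[\calS]_{\le d}$ along $\Id(\calS)\cap\K[\mX]_{\le d}$, so that
\[
K_d=\mathrm{ev}_X\circ\pi\circ\Phi,\qquad \Phi(\alpha):=f_\alpha.
\]

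A preliminary ingredient I would prove is an $\calS$-adapted version of Theorem~\ref{Thm:dual}: for generic $s_1,\dots,s_m\in\calS$ with $m\ge\dim\K[\calS]_{\le d}$, the polynomials $k_{\le d}(s_i,\mX)$ span $\K[\calS]_{\le d}$. This follows by combining Theorem~\ref{Thm:dual-manifold}, which identifies $\K[\calS]_{\le d}$ with the span of all $k_{\le d}(s,\mX)$, $s\in\calS$, with the same genericity argument as in Theorem~\ref{Thm:dual}. Dualising via the reproducing property, $\mathrm{ev}_X\colon \K[\calS]_{\le d}\to\K^N$ has rank $\min(\dim\K[\calS]_{\le d},N)$ for generic $X$; similarly, $\pi\circ\Phi\colon \K^D\to\K[\calS]_{\le d}$ has rank $\min(\dim\K[\calS]_{\le d},D)$ for generic $Y$ (the $y_j$ need only be generic in $\K^n$, not in $\calS$, because $\pi(k_{\le d}(\mX,y_j))$ pairs with $g\in\K[\calS]_{\le d}$ to give $g(y_j)$, and a nonzero polynomial of bounded degree cannot vanish at sufficiently many generic points).

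Part (i) then falls out by composing the two rank estimates: independent generic choices of $X$ and $Y$ prevent cancellations, giving $\mathrm{rk}\,K_d=\min(N,D,\dim\K[\calS]_{\le d})$. Part (iii) is the pointwise version of the same factorization. The forward implication is immediate from $(K_d\alpha)_i=f_\alpha(x_i)$. For the converse under $N\ge\dim\K[\calS]_{\le d}$, the $\calS$-analog makes $\mathrm{ev}_X$ injective on $\K[\calS]_{\le d}$, so $K_d\alpha=0$ forces $\pi(f_\alpha)=0$, i.e.\ $f_\alpha\in\Id(\calS)$. Failure when either threshold is missed is witnessed by explicit vectors exploiting the rank deficiencies in (i).

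I expect part (ii) to be the main obstacle. The inclusion $k_{\le d}(\mX,Y)\cdot\rowspan K_d\subseteq\K[\calS]_{\le d}$ requires bridging the Euclidean orthogonality used to define $\rowspan K_d\subseteq\K^D$ with the polynomial inner product that defines $\K[\calS]_{\le d}$. The bridge is supplied by a second use of the reproducing property in the form of the Gram identity $\iprod{k_{\le d}(\mX,y_j)}{k_{\le d}(\mX,y_{j'})}=k_{\le d}(y_j,y_{j'})$, combined with the multiplicative orthogonality in equation~\ref{Eq:orthdeg}. Using these, the identification $\ker K_d=\Phi^{-1}(\Id(\calS)\cap\K[\mX]_{\le d})$ (valid once $N\ge\dim\K[\calS]_{\le d}$) translates into an orthogonality statement inside $\K[\mX]_{\le d}$ that yields the containment; the equality case then follows from part (i) by a dimension count matching $\mathrm{rk}\,K_d$ against $\dim\K[\calS]_{\le d}$.
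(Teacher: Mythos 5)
Your factorization $K_d=\mathrm{ev}_X\circ\pi\circ\Phi$ is a tidier packaging of the paper's own argument, and it does carry parts (i) and (iii): the paper likewise identifies $\lspan\{k_{\le d}(s,\mX):s\in\calS\}$ with $\K[\calS]_{\le d}$ via Theorem~\ref{Thm:dual-manifold}, obtains the two $\min(\tau,\cdot)$ rank statements (writing $\tau:=\dim\K[\calS]_{\le d}$) from genericity of $Y$ and of $X$, and handles the sub-threshold regime by deleting rows and columns; your treatment is at the same level of rigor, and your observation that the $y_j$ need only be generic in $\K^n$ rather than in $\calS$ is the right one. One side remark: the ``if and only if'' that you and the paper both assert for the converse in (iii) is too strong --- your own argument shows the converse already holds whenever $N\ge\tau$, irrespective of $D$.

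The genuine gap is part (ii), and it is not merely the ``main obstacle'': the bridge you plan to build does not exist, because $\Phi$ is not an isometry from Euclidean $\K^D$ to $(\K[\mX]_{\le d},\iprod{\cdot}{\cdot})$. The Gram identity you invoke says precisely that $\Phi$ transports the Euclidean form to the form given by the matrix $k_{\le d}(Y,Y)$, which is not the identity; hence $v\perp\ker K_d$ in the Euclidean sense gives no control on $\iprod{f_v}{f_w}$ for $w\in\ker K_d$, and in particular does not force $f_v\perp\calI_{\le d}$. In fact the containment in (ii) fails already for $d=1$: take $\calS=\K u$ a line through the origin, so that $\calI_{\le1}=\{\langle a,\mX\rangle:\langle a,u\rangle=0\}$ and $\K[\calS]_{\le1}=\lspan\{1,\langle u,\mX\rangle\}$. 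The first row $r=\bigl(k_{\le1}(x_1,y_j)\bigr)_j$ of $K_1$ lies in $\rowspan K_1$, and
\[
f_r=\sum_{j=1}^D\bigl(\theta\langle x_1,y_j\rangle+1\bigr)\bigl(\theta\langle\mX,y_j\rangle+1\bigr)
\]
has degree-one part $\theta\langle\,\theta Mx_1+\bar y,\ \mX\,\rangle$ with $M=\sum_jy_jy_j^\top$ and $\bar y=\sum_jy_j$; for generic $y_j$ the vector $\theta Mx_1+\bar y$ is not a multiple of $u$, so $f_r\notin\calI_{\le1}^\perp=\K[\calS]_{\le1}$, no matter how large $N$ and $D$ are. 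So no amount of massaging with Equation~\ref{Eq:orthdeg} will yield the containment. What your factorization \emph{does} prove --- and what the algorithms actually rely on --- is the statement modulo the ideal: $\pi(f_v)\in\K[\calS]_{\le d}$ trivially for every $v$, and for $N,D\ge\tau$ the projections $\pi(f_v)$, $v\in\rowspan K_d$, exhaust $\K[\calS]_{\le d}$; equivalently, $k_{\le d}(\mX,Y)\cdot\rowspan K_d$ spans $\K[\calS]_{\le d}$ only after reducing modulo $\calI_{\le d}$, i.e., after restricting the functions to $\calS$. You should prove that corrected statement (which your rank computation in (i) already essentially gives) rather than the literal containment; note that the paper's own one-sentence justification of (ii) (``replace $\mX$ by $X$'') does not address this point either.
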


\begin{proof}
For $d\ge 0$, we write $\tau:=\dim \K [\calS]_{\le d}$. Claim (i): First assume that $N,D\ge \tau$. Theorem~\ref{Thm:dual-manifold}
implies that  $\lspan \{k_{\le d}(s,\mX),\; s\in \calS\} =
(\Id(\calS)\cap\K[\mX]_{\le d})^\perp = \K [\calS]_{\le d}$. In particular, this
shows $\dim \lspan \{k_{\le d}(s,\mX),\; s\in \calS\} = \tau$.
Therefore $D\ge \tau$ generic elements of the form $k_{\le d}(\mX,y_i)$
will generate $\K [\calS]_{\le d}$, and their span will have dimension $\tau$.
By interpreting the variables in~$\mX$ once more as functions
$\K^n\rightarrow \K^n$, we view $k_{\le d}(\mX,y_i)$ as a function
$k_{\le d}(.,y_i):\K^n\rightarrow \R$. Then the polynomials $k_{\le d}(\mX,y_i)$
span a vector space of dimension~$\tau$ if and only if the functions
$k_{\le d}(.,y_i)$ do that as well. By substituting $N\ge \tau$ generic arguments
$x_1,\dots, x_N$, we see that the vectors $k_{\le d}(X,y_i)$ span a vector
space of dimension~$\tau$. This is equivalent to $\rk K_d = \tau$, proving the statement in case $N,D\ge \tau$. The general statement follows by starting
with $N,D=\tau$ and removing rows and columns. Claims (ii) and (iii) follow from the fact that, if $N\ge \tau$,
the vector of variables~$\mX$ in the conditions can be equivalently replaced by~$X$.
The case $N\lneq \tau$ follows again by removing rows and columns.
\end{proof}

Theorem~\ref{Thm:dual-interp} states that a large enough kernel matrix
of type $K(X,Y)$ contains all information on~$\calS$, assuming the kernel degree is high enough as well. Algorithmically, it yields the important statement that, instead of size $O(\dim \K[\mX]_{\le d})=O(n^d)$ matrices which grow exponentially in $d$, we have
to deal with size $O(\dim \K [\calS]_{\le d})= O(N)$ matrices instead. These are
matrices whose size is bounded from above by the number of data points.
In fact, the effective size is usually even lower, depending on model complexity.
We also note that claims~(ii) and~(iii) of Theorem~\ref{Thm:dual-interp} make
statements about functions of the form $F(.)=\sum_{i=1}^D\alpha_i k(.,y_i)$, which is a familiar form of decision functions. Since the functions in claim~(ii) are part of
the interpolation space of~$\calS$, we call them "discriminiative features".
The functions in claim~(iii) vanish on~$\calS$. Therefore they carry structural
information about~$\calS$, so we will call them "generative features".
A basis of~$\Id (\calS)$ consisting of such generative features will be called
a \emph{kernel border basis}. These features will be identified by both algorithms we introduce in the subsequent section.

\section{Learning with Ideals}\label{Sec:kernels}

Similar to the ubiquity of kernels, ideals enable us to address a wide variety of
learning scenarios. The general motive in ideal-manifold duality is that generative
features are transformed into discriminative ones and vice versa. For instance,
a strategy  for estimating a discriminative function in kernel learning, e.g., by
kernel SVM, will be close to estimating a descriptive feature in learning a manifold
via obtaining ideal generators. Conversely, estimating descriptive features
such as in kernel PCA will relate to obtaining discriminative features from
the interpolation space. This permits us to "dualize" techniques and to
transfer the absorption property of ideals back into the kernel world,
thus yielding new model selection tools and compact representations in terms
of kernel degree.

\subsection{Statistical Learning Theory for Ideals}\label{sec:kernels.learning}
The following learning approach to ideals is inspired by Vapnik's statistical
learning theory. We assume that there is a \emph{generative truth}, modelled
by an unknown algebraic manifold $\calS\subset \K^n$ with vanishing ideal
$\calI = \Id(\calS)$. The sampling process produces a discrete point
set $x_1,\ldots, x_N\in \K^n$, where $x_i = s_i + \varepsilon_i$,
with $s_i\in \calS$ sampled from a (Hausdorff-)continuous density
on $\calS$ and $\varepsilon_i\in\K^n$ being i.i.d.~centered noise of finite variance.
Basic learning tasks can be expressed in this ideal-learning framework as follows:

\begin{Ex}[name={Dimension reduction}]\label{Ex:dimension-reduction}
The dimension $n$ is large.  The task is to estimate the true
manifold $\calS$, assuming that $d\ll n$.
\end{Ex}

\begin{Ex}[name={Regression}]\label{Ex:regression}
The variables $\mX$ are partitioned into dependent and independent variables.
The noise acts only on the dependent variables, and the task is to estimate~$\calS$.
\end{Ex}

\begin{Ex}[{name={Classification}}]\label{Ex:classification}
The generative truth $\calS$ is assumed to have irreducible components
$\calS_1,\ldots,\calS_t$. The sample~$S$ is given as points $(x_i,\ell_i)$
with labels $\ell\in[t]$.  The task is to estimate
the components ~$\calS_i$ of~$\calS$.
\end{Ex}

\begin{Ex}[name={Clustering}]\label{Ex:clustering}
The generative truth~$\calS$ is assumed to have irreducible components
$\calS_1,\ldots,\calS_t$.  The sample~$S$ is unlabeled, and the task
is to estimate $\calS_1,\ldots,\calS_t$.
\end{Ex}

Measures of statistical optimality will be given after presenting our main algorithmic principle.

\subsection{A New Manifold Learning Kernel PCA}
Now we describe the basic idea behind computing with kernel duals.
It can be applied in all of the examples above. Algorithm~\ref{Alg:features},
which we term Ideal PCA, takes data $x_1,\dots, x_N$, sampled with noise
from a manifold~$\calS$, and returns feature functions of the
form $F(.) = \sum_{i=1}^D \alpha_i k(.,y_i)$ which are labelled either
generative (= part of the interpolation space) or discriminative (= part of
the kernel border basis). The kernel function~$k$ is one of the polynomial
kernels, but could be any kernel in principle.

\begin{algorithm}[h]
\caption[\texttt{IPCA} Computes interpolation space and kernel border basis]{\texttt{IPCA} Computes interpolation space and kernel border basis.\\
\textit{Input:} data $x_1,\dots x_N\in \K^n$, given as rows of an $(N\times n)$ matrix $X$, degree $d$, feature space size $D$, threshold $\epsilon$.\\
\textit{Output:} Feature functions $F_i = \sum_{j=1}^D \alpha_{ij} k(y_j,.)$, with labels "generative" or "discriminative" and quanta $\sigma_i$. \label{Alg:features}}
\begin{algorithmic}[1]
\STATE Sample $D$ random points $y_i\in \K^n$; write those into a $(D \times n)$
matrix $Y$.
\STATE Compute the $N\times D$ kernel matrix $K=k(X,Y)$ with $(i,j)$-th
entry $k(x_i, y_j)$.
\STATE Compute the singular value decomposition $K = U S V^\top$,
$S=\diag (\sigma_1,\dots,\sigma_m$)
\STATE The $\alpha_{ij}$ in the output are the entries of~$V$; the $y_j$
are the $y_j$ sampled above.
\STATE For each feature $F_i$, assign the label "discriminative"
if  $\sigma_i\ge \epsilon$, otherwise "generative". Also return the $\sigma_i$.
\end{algorithmic}
\end{algorithm}
The number $D$ should be chosen sufficiently large, either as $D\ge N$, or $D \ge \dim \K [\calS]_{\le d}$, if known. In these cases, Theorem~\ref{Thm:dual-interp} guarantees convergence in the noiseless case; we will see later that this also makes IPCA (Alg.~\ref{Alg:features}) a noise consistent algorithm. In the output, the discriminative features are expected to vary strongly when leaving the manifold~$\calS$. Therefore
they describe the internal structure of the data. On the other hand, the generative
features are expected to almost vanish in a neighborhood of~$\calS$. Therefore they
describe the manifold itself. The singular values~$\sigma_i$ yield a quantitative
measure. All - or some of the - features obtained from IPCA (Alg.~\ref{Alg:features})
can be bulk evaluated in an efficient way: first compute the kernel matrix $K= k(X,Y)$,
then the features are obtained from the matrix $K\cdot V$. We remark again that it is not necessary to choose $D = O(n^d)$ - as it would be in symbolic methods - due to the rank guarantee in Theorem~\ref{Thm:dual-interp}.

\subsection{Noise Consisteny}

As already said above, IPCA (Alg.~\ref{Alg:features}) transfers the exact statement
in Theorem~\ref{Thm:dual-interp} to the case of noisy data. We now show that IPCA
does this in a beneficial way. First we want to remark that the classical concept
of consistency will not be applicable here, since a manifold~$\calS$ which is a
point can lead to the same observations as a line~$\calS$ if i.i.d.~Gaussian noise
is added to the samples. Therefore, no algorithm can ``converge'' to~$\calS$
in the limit of the sample size. We argue that the proper notion of consistency in the manifold setting is \emph{noise consistency}:

\begin{Def}\label{Def:stability}
Consider the learning setup outlined in section~\ref{sec:kernels.learning}. That is,
let $\calS\subset \K^n$ be manifold with ideal $\calI = \Id(\calS)$, and assume that
we have noisy samples $x_i = s_i + \varepsilon_i$ from~$\calS$. We say that
an estimator $\widehat{\calS}$ is \emph{noise consistent} if
$\Id(\widehat{\calS}) =: \widehat{\calI} \to \calI$ as
$||(\varepsilon_1,\ldots,\varepsilon_n)||\to 0$, where convergence
of $\widehat{\calI}$ is defined as convergence of all vector spaces
$\widehat{\calI}_{\le d} \to \calI_{\le d}$ (possibly of different order).
\end{Def}
Intuitively, noise consistency is the combination of stability with respect to noise, and correctness in the noise-free case.

\begin{Thm}\label{Thm:basic-estimate}
Consider the learning setup outlined in section~\ref{sec:kernels.learning}. That is,
let $\calS\subset \K^n$ be a manifold with ideal $\calI = \Id(\calS)$, and assume that
we have noisy samples $x_i = s_i + \varepsilon_i$ from $\calS$.
IPCA (Alg.~\ref{Alg:features}) estimates~$\calS$ noise-consistently in the following
sense: assume $\calI$ is generated in degree~$d$ or less,
let $N\ge \K[\calS]_{\le d}$, let $\widehat{\K [\calS]_{\le d}}$ be the vector space
generated by the discriminative features that IPCA, with inhomogenous
kernel $k=k_{\le d}$, outputs. Let $\widehat{\calI}$ be the ideal generated by the
orthogonal complement of $\widehat{\K [\calS]_{\le d}}$. Then, $\Van(\widehat{\calI})$ is a noise consistent estimate for $\calS$.
\end{Thm}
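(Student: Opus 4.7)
The plan is to split the argument into (i) algebraic correctness when the noise vanishes, and (ii) continuity of the IPCA output as $\|(\varepsilon_1,\ldots,\varepsilon_N)\|\to 0$, with the threshold $\epsilon$ taken to be a function of the noise level that shrinks to zero while remaining asymptotically above the perturbation of the ``zero block'' of singular values of the unperturbed kernel matrix.

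For the noise-free case, apply Theorem~\ref{Thm:dual-interp} to the samples $s_i$, which are generic on $\calS$ because they are drawn from a Hausdorff-continuous density, and similarly $Y$ is generic. With $\tau := \dim \K[\calS]_{\le d}$ and $N, D \ge \tau$, part~(i) gives $\rk K^{(0)} = \tau$ and part~(ii) gives $k_{\le d}(\mX, Y)\cdot \rowspan K^{(0)} = \K[\calS]_{\le d}$. The right singular vectors of $K^{(0)}$ with nonzero singular value span exactly $\rowspan K^{(0)}$, so the discriminative features produced by IPCA span $\K[\calS]_{\le d}$; that is, $\widehat{\K[\calS]_{\le d}} = \K[\calS]_{\le d}$. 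Its orthogonal complement in $\K[\mX]_{\le d}$ is $\calI\cap \K[\mX]_{\le d}$ by Theorem~\ref{Thm:dual-manifold}, and since $\calI$ is generated in degree at most $d$, the ideal this spans equals $\calI$. Hence $\widehat{\calI} = \calI$ and $\Van(\widehat{\calI}) = \calS$ at zero noise.

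For the stability step, the entries of $K^{(\varepsilon)} = k_{\le d}(X^{(\varepsilon)}, Y)$ are polynomials in the data, so $K^{(\varepsilon)} \to K^{(0)}$ with a quantitative polynomial rate in $\|(\varepsilon_1,\ldots,\varepsilon_N)\|$. The unperturbed matrix $K^{(0)}$ has a spectral gap: its nonzero singular values are bounded below by some $\sigma_\tau > 0$, while the remaining $\min(N,D) - \tau$ are exactly zero. A standard Wedin / Davis--Kahan subspace-perturbation bound then yields convergence of the span of the right-singular vectors of $K^{(\varepsilon)}$ above threshold to the top-$\tau$ right-singular subspace of $K^{(0)}$, provided $\epsilon$ is chosen between the noise-perturbation scale and $\sigma_\tau$. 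Pushing this through the continuous map $\alpha \mapsto k_{\le d}(\mX, Y)\alpha$ gives $\widehat{\K[\calS]_{\le d}} \to \K[\calS]_{\le d}$, and taking orthogonal complements (continuous on the Grassmannian of subspaces of fixed ambient dimension) gives $\widehat{\calI}\cap \K[\mX]_{\le d} \to \calI \cap \K[\mX]_{\le d}$. For each $d' \ge d$, the degree-$d$ generation hypothesis gives $\calI \cap \K[\mX]_{\le d'} = \K[\mX]_{\le d'-d} \cdot (\calI \cap \K[\mX]_{\le d})$, with the analogous identity for $\widehat{\calI}$ by construction. Bilinearity of polynomial multiplication transports the convergence at degree $d$ to every $d' \ge d$, while for $d' < d$ truncation is immediate; this verifies Definition~\ref{Def:stability}.

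The main obstacle is the handling of the threshold $\epsilon$: the statement really asserts the existence of a threshold schedule $\epsilon = \epsilon(\|(\varepsilon_1,\ldots,\varepsilon_N)\|)$ that cleanly separates the ``numerical zero'' singular values of $K^{(\varepsilon)}$ from the genuinely nonzero ones, and the conclusion holds uniformly along any such schedule. Making this fully rigorous requires quantifying the spectral gap $\sigma_\tau > 0$ of $K^{(0)}$ in terms of the sampling density and the genericity assumption on $Y$, and then invoking a singular-subspace perturbation bound on $K^{(\varepsilon)}$; the remaining pieces --- correctness at zero noise via Theorems~\ref{Thm:dual-manifold} and~\ref{Thm:dual-interp} and the propagation from degree $d$ to higher degrees via the generation hypothesis --- are essentially forced by the duality developed in the previous sections.
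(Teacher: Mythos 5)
Your proposal is correct and follows essentially the same route as the paper's proof: exact recovery of $\K[\calS]_{\le d}$ at zero noise via Theorem~\ref{Thm:dual-interp}, stability of the thresholded SVD under perturbation of the kernel matrix, passage to the orthogonal complement, and extension from degree~$d$ to all degrees via the generation hypothesis. The only difference is one of detail: where the paper appeals tersely to Eckart--Young and ``noise consistency of SVD,'' you correctly identify the relevant tool as a Wedin/Davis--Kahan singular-subspace perturbation bound together with an explicit threshold schedule sitting between the perturbation scale and the spectral gap of the noise-free kernel matrix --- a sharper and more honest account of the same step.
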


\begin{proof}
If $d=1$, this follows directly from the Eckart-Young-Theorem which implies that thresholded SVD is a noise consistent estimator for the span of a matrix. The general statement is implied as follows: by Theorem~\ref{Thm:dual-interp} and noise consistency of SVD, $\widehat{\K [\calS]_{\le d}}$ is a noise consistent estimate for $\K [\calS]_{\le d}$. Thus, by passing to the orthogonal complement, $\widehat{\calI}_{\le d}$ is a noise consistent estimate for $\calI_{\le d}$. Since $\calI$ is generated in degree $d$, this implies that $\widehat{\calI}$ is a noise consistent estimate for $\calI$, which implies the statement.
\end{proof}

\subsection{Informal Analysis of the Basic Method}\label{sec:kernels.analysis}
At first glance, IPCA (Alg.~\ref{Alg:features}) may seem to be another version of
kernel PCA or kernel SVD. However, there is one main difference: the matrix~$Y$
is random. Therefore we do not work with the kernel matrix $k(X,X)$, as usual,
but with a matrix $k(X,Y)$.\\
This enables us to look at the feature span of~$X$ \emph{from the outside},
whereas the classical approach only looks at relations between~$X$ and~$X$.
More specifically, doing PCA or SVD or any method involving only $k(X,X)$
will reveal only features \emph{inside the data manifold}. The manifold
itself - as the important generative object - will not be identified.
This shortcoming has already been noticed in~\cite{VCA2013}[section 3.1: "kernels
can't help"], where the authors conclude that such methods are not suitable
for learning generative features in an algebraic setting. The matrix $k(X,Y)$
can be used to capture the \emph{extrinsic structure} of the data manifold.\\
What happens mathematically can be exposed using a linear (and noise-free) example:
take the kernel to be the linear scalar product $k=\langle.,.\rangle$.
Suppose our data $x_1,\dots, x_N$ come from a linear subspace $L\subseteq \R^n$.
Then we would like to learn discriminative features, that is, features that
vary among the $x_i$, here the principal components, and generative features,
in this simple case the subspace~$L$ from which the data are sampled. Writing~$X$
as the $(N\times n)$-matrix with the $x_i$ as rows, the kernel matrix is
the $(N\times N)$-matrix $K = k(X,X) = X X^\top$. Note that this differs
from the $(n\times n)$-matrix $X^\top X$ which is taken in classic PCA (for
centered data). Singular value decomposition of~$K$ will reveal features of~$X$,
such as the dimension of~$L$, through the rank of~$K$. However, rotating~$L$ together
with the~$x_i$, or embedding it into a different $\K^{n'}$ will leave~$K$ unchanged.
Therefore the generative information on~$L$ is lost in $k(X,X)$ since this matrix
contains only information on~$X$ \emph{inside}~$L$. On the other hand,
if a random matrix $Y \in \R^{D\times n}$ is taken with $D\ge n$, and if
$K'= k(X,Y) = X Y^\top$ is considered, the space~$L$ can easily be reconstructed
from the right singular vectors. Moreover, all information on $k(X,X)$ is potentially contained as well, most easily (but impractically) by adding in the rows of~$X$
as rows of~$Y$. This also shows that the important part of~$Y$ is that "orthogonal"
to~$X$, because it allow us to capture the \emph{extrinsic} structure of~$L$.\\
The case of general kernels is analogous, if we replace the scalar products above
by the kernel function. The role of the~$y_i$, which are above a basis for $\K^n$,
is played by the feature vectors $\phi (y_i)$ which now span the complete feature
space. The mathematical justification is given by Theorem~\ref{Thm:dual-manifold}
which shows that the manifold $\calS$ corresponds to a proper linear subspace
of the complete feature space. The interpolation space is exactly orthogonal to
decision functions $k(x_i,\mX)$ with~$x_i$ a data point. Therefore $k(X,X)$ cannot
be used to say anything about the interpolation space. On the other hand, Theorem~\ref{Thm:dual} says that the whole feature space is dual to decision
functions of the form $k(\mX,y_i)$, with $y_i$ generic/random; so, $k(X,Y)$ is the
proper object which captures both features of the interpolation space - through the
part of~$Y$ that is kernel orthogonal to $X$ - and the intrinsic features of the
data in~$X$ which can be obtained through $k(X,X)$ and are recovered e.g.~by kernel PCA.

\subsection{The Kernel Powering Method}\label{sec:kernels.power}
The IPCA algorithm (Alg.~\ref{Alg:features}) provides an estimate for the interpolation
space, and therefore the generative manifold~$\calS$, as discussed above.
However, there are two points where improvement is possible: (a) the features
learnt are all of the same degree, while there may be features of different degrees.
In particular, if an overly high~$d$ is chosen in IPCA, and if~$\calS$ is for
instance linear, this will not be explicitly noticed. (b) The size of the approximate
kernel border basis, i.e., the number of generators for $\Id (\calS)$,
when naively estimated as generators for the orthogonal of $\K [\calS]_{\le d}$,
grows exponentially in~$d$, since $\dim \K [\mX]_{\le d}$ does.\\
In the following we address these issues simultaneously by a
powering-projection-strategy applied to the kernel matrix. To address (a), we
increase degrees and learn features of increasing degrees step-by-step by
computing entrywise-powers of the degree~$1$ kernel matrix,
exploiting the fact that the degree~$d$ kernel matrix is the $d$-th power
To address (b), we use the absorption property of the ideal $\Id (\calS)$
to obtain a low number of multiplicative generators, by projecting the kernel
matrix onto a low rank approximation; by Theorem~\ref{Thm:dual-interp}~(iii)
this corresponds to adding new elements to the kernel border basis.
Powering is furthermore compatible with absorption by Equation~\ref{Eq:orthdeg}.
So, after powering, only new generators in higher degree will appear, and this
allows us to add them sequentially to the approximate kernel border basis.
Concretely, this works as follows. Start with the linear kernel matrix $K_1$.
Then project on a smaller rank matrix $K_1'$ by singular value thresholding.
Next, compute $K_2$ as the entrywise second power $K_2:=K_1'\otimes K_1$
and project again onto a smaller rank matrix $K_2'$.
In general, obtain $K_d:=K_{d-1}'\otimes K_1$, then threshold.
The threshold can be chosen fixed, or according to the Hilbert function of~$\calS$,
if that is known. In each step, we add features to the approximate kernel border
basis which correspond to singular values under the threshold, but not exactly zero.
This ``degree greedy'' strategy can be seen as a kernelization of some ideas in
the AVI class of algorithms~\cite{AVI,Sauer07}.

\subsection{Approximate Vanishing Ideal Component Analysis}
We now describe an algorithm, which uses the power-projecting strategy, called Approximate Vanishing Ideal Component Analysis (AVICA). AVICA will output generative and discriminative features of various degrees, ordered by informativity. As discussed in section~\ref{sec:kernels.power}, the main difference to IPCA (Alg.~\ref{Alg:features}) lies in the fact that the ``degree greedy'' strategy collects generators for the ideal of the manifold $\calS$ with increasing degree, therefore offers a much sparser generative description of $\calS$ than IPCA, while learning degree-ordered generators of the interpolation space as well. We present AVICA as Algorithm~\ref{Alg:AVICA}; for simplicity of reading, we first introduce notation for the projection step which is singular value thresholding:

\begin{Def}
For a matrix $K$ and a threshold $\epsilon$, we define the \emph{thresholded SVD} to be
$K = U S V + U^\perp S^\perp V^\perp,$
where the concatenations $(U,U^\perp)$ and $(V,V^\perp)^\top$ are the left and right singular matrices of the usual singular value decomposition, with singular values in $S$ having absolute value $\ge \epsilon$, and those in $S^\perp$ being $< \epsilon$.
\end{Def}

Applied to the kernel matrix, this means, according to Theorem~\ref{Thm:dual-interp}~(iii) that the features corresponding to $V^\perp$ are added to the approximate kernel border basis.

\begin{algorithm}[h]
\caption[\texttt{AVICA} Sparsely computes approximate interpolation space and kernel border basis]{\texttt{AVICA} Sparsely computes approximate interpolation space and kernel border basis.\\
\textit{Input:} data $x_1,\dots x_N\in \K^n$, given as rows of an $(N\times n)$ matrix $X$, maximum degree $maxdeg$, threshold $\epsilon$.\\
\textit{Output:} Feature functions $F_i = \sum_{j=1}^D \alpha_{ij} k(y_j,.)$, with labels "generative" or "discriminative" and quanta $q_i$. \label{Alg:AVICA}}
\begin{algorithmic}[1]
\STATE Sample random points $y_i\in \K^n, 1\le i\le D$; write those in a $(D \times n)$ matrix $Y$.	\label{Alg:AVICA.step1}
\STATE Let $K_0$ be the all-ones $(N \times D)$ matrix.
\STATE Compute the $N\times D$ matrix $K = k(X,Y)$.
\FOR{ $d = 1,\ldots, maxdeg$}
\STATE Set $\epsilon\leftarrow \epsilon\cdot \theta$
\STATE Set $K_d\leftarrow K_{d-1}\otimes K$ (entry-wise product)
\STATE Compute the $\epsilon$-thresholded SVD\\ $K_d = U_d S_d V_d + U^\perp_d S^\perp_d V^\perp_d$.
\STATE For each column $v$ of $V$, return a discriminative feature $F_*(.) = \sum_{j=1}^D v_j k^d (y_j,.)$.
\STATE For each column $v$ of $V^\perp$ with singular value that is not zero with machine precision, return a generative feature $F_*(.) = \sum_{j=1}^D v_j k^d (y_j,.)$
\STATE Also return as quantum $q_*$ the corresponding singular values times $\theta^d$.
\STATE $K_d\leftarrow U_d S_d V_d$
\ENDFOR
\STATE Display as informativity order the generative features ascendingly by $q_*$, the discriminative ones descendingly.
\end{algorithmic}
\end{algorithm}

Since AVICA computes a basis for the interpolation space in a similar way as IPCA (Alg.~\ref{Alg:features}), an analogous proof shows that AVICA is a noise consistent estimator for $\calS$ in the same sense. Evaluation of the features can again be done efficiently, by storing $Y,S_d,V_d,S^\perp_d,V^\perp_d$ as model parameters, then repeating the computations. Since this is sligtly more complex than in the case of IPCA, we describe this explicitly in form of Algorithm~\ref{Alg:AVICA_eval}.

\begin{algorithm}[h]
\caption[\texttt{eval.AVICA} Bulk evaluates AVICA features]{\texttt{ eval.AVICA }
Bulk evaluates features.\\
\textit{Input:} matrices $(Y,S_d,V_d,S^\perp, V^\perp)$, data $x_1,\dots x_N\in \K^n$, given as rows of an $(N\times n)$ matrix $X$, maximum degree $maxdeg$.\\
\textit{Output:} Evaluations of all feature functions\\ $F_i(x_k) = \sum_{j=1}^D \alpha_{ij} k(y_j,x_k)$. \label{Alg:AVICA_eval}}
\begin{algorithmic}[1]
\STATE Let $K_0$ be the all-ones $(N \times D)$ matrix.
\STATE Compute the $N\times D$ matrix $K = k(X,Y)$.
\FOR{ $d = 1,\ldots, maxdeg$}
\STATE Set $K_d\leftarrow K_{d-1}\otimes K$
\STATE Return $K_d\cdot V_d$ for discriminative and $K_d\cdot V_d^\perp$ for generative features. Rows are indexed by $k$, columns by $i$.
\STATE $K_d\leftarrow K_d V_d^\top V_d.$
\ENDFOR
\end{algorithmic}
\end{algorithm}

\subsection{AVICA for Discriminative Learning}\label{sec:kernels.disc}
From the discussion so far, it appears that the main advantage and novel of AVICA is learning generative features of some data manifold. However, with a minor but crucial modification, it can be adapted for discriminative supervised learning in a natural way which will allow one-vs-all or one-vs-one discrimination which is in some sense also class-generative. Namely, consider a feature $F(.)=\sum_{i=1}^D \alpha_i k(.,y_i)$ in the kernel border basis, that is, $F(s)\approx 0$ for $s\in \calS$. We have said that such an $F$ is generative, as it describes $\calS$ - but it can also be viewed discriminative, distinguishing $\calS$ from the ``set of general points'' in $\K^n$. While this is an unusual view, it is the one which generalizes well to discriminative learning: the $y_i$ were chosen to span $\K^n$ or the feature space; picking them, instead, as elements of a different manifold $\calS'\subsetneq \K^n$ will in the same way allow to distinguish $\calS$ from $\calS'$. Specifically, in step~\ref{Alg:AVICA.step1} of AVICA (Alg.~\ref{Alg:AVICA}), replace random sampling from $\K^n$ with random sampling in some $\calS'\supseteq \calS$, in order to learn to identify points in $\calS$ among points in $\calS'$. To learn a one-vs-all-classifier between classes $\calS_1,\dots, \calS_k$, choose $\calS = \calS_1$ and $\calS' = \calS_1\cup\dots\cup \calS_k$, then use the "generative" features, in the kernel border basis, as class discriminative.

\subsection{AVICA for Non-Polynomial Kernels}\label{sec:kernels.nonpoly}

We would like to stress that neither IPCA (Alg.~\ref{Alg:features}) nor AVICA (Alg.~\ref{Alg:AVICA}) makes a strong use of the polynomial kernel; for a general kernel, duality with a polynomial ring $\K [k(y_i,\mX),1\le i\le D]$ takes the place of duality with the polynomial ring $\K[\mX]$; the number $D$ has to be taken sufficiently large for the application. Again, generative and discriminative features can be both extracted with IPCA and AVICA. The interpolation space corresponds to the usual features learnt by kernel methods, while the manifold, or ideal, yields new generative ones, depending on the kernel. For example, when taking $k$ to be the Gauss kernel, AVICA will learn the \emph{clusters themselves}, instead of separators, since the clusters correspond to manifolds in the Gauss feature space.

\section{Experiments}\label{Sec:experiments}

\subsection{The circle}
A well-known example used to motivate discriminative kernel classification is a $2$-dimensional
problem, in which the classes are sampled from concentric circles.  %
\begin{figure*}[ht]
\begin{center}
\subfigure[$\sigma = 1.1$]{%
\includegraphics[height=0.12\textwidth]{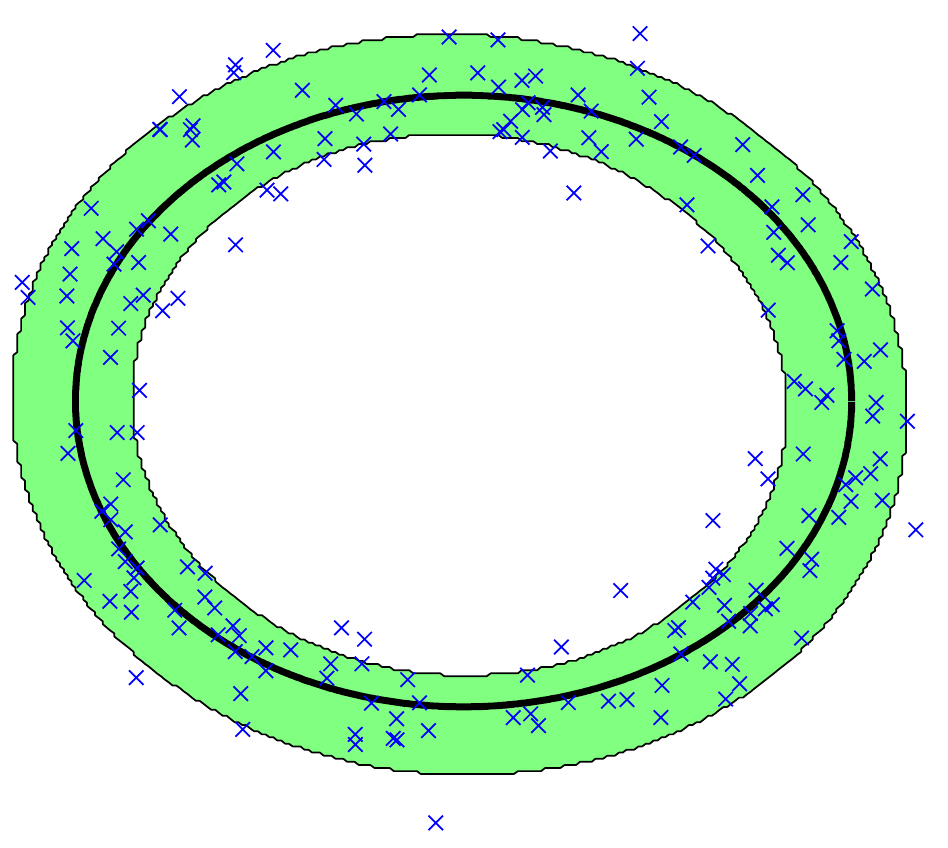}
}
\subfigure[$\sigma = 1.4$]{%
\includegraphics[height=0.12\textwidth]{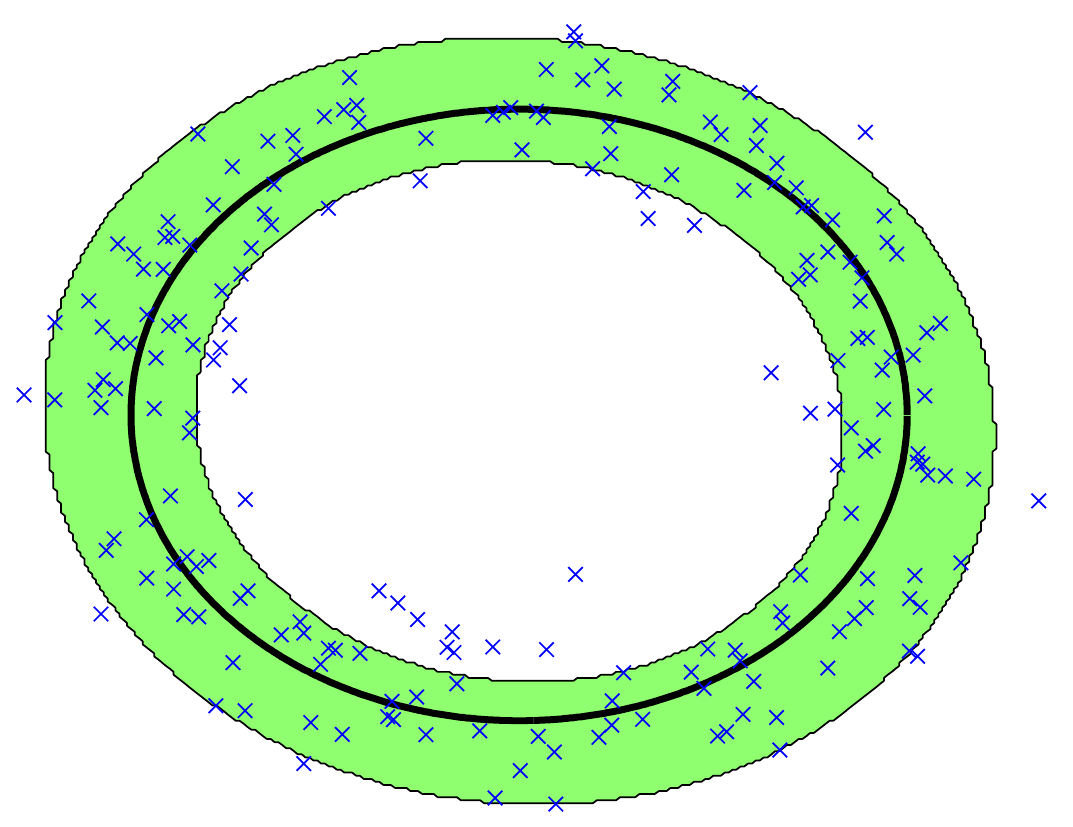}
}
\subfigure[$\sigma = 1.7$]{%
\includegraphics[height=0.12\textwidth]{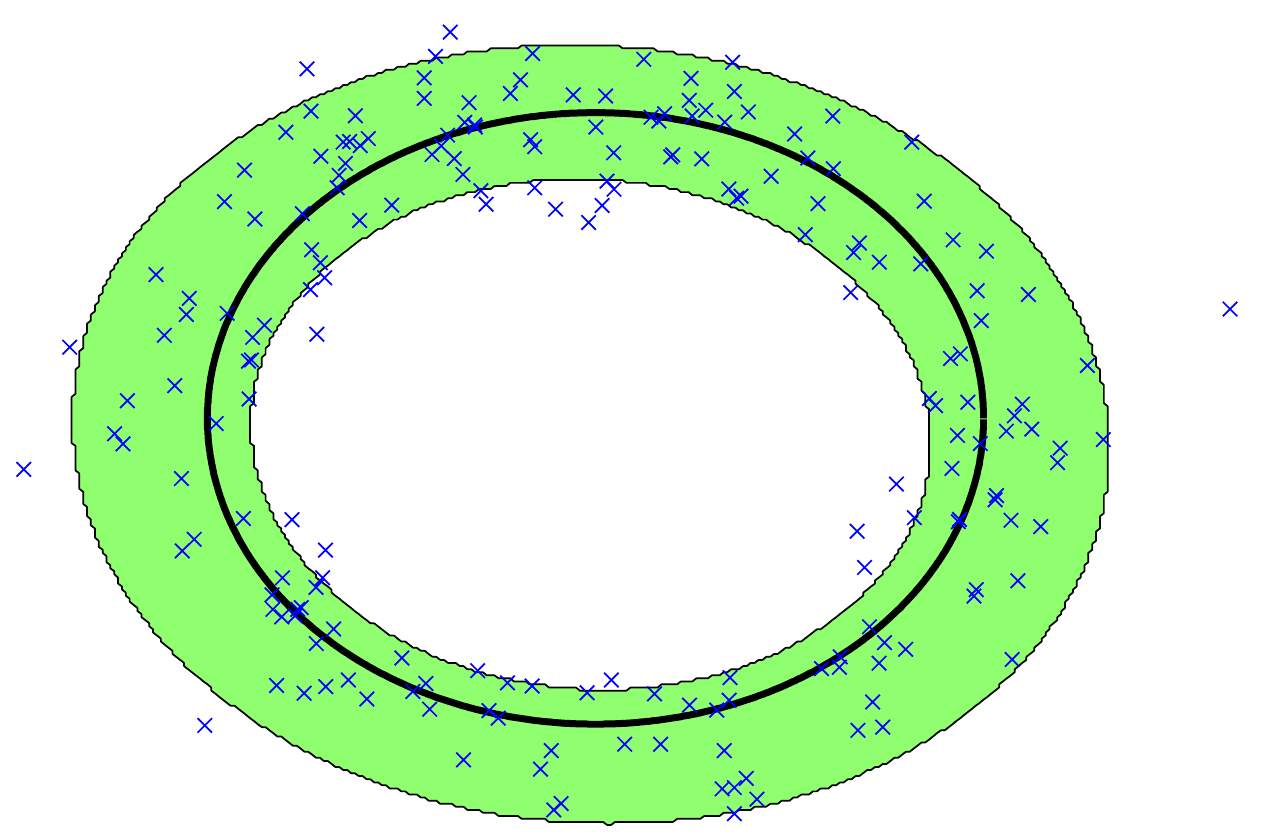}
}
\subfigure[$\sigma = 2.0$]{%
\includegraphics[height=0.12\textwidth]{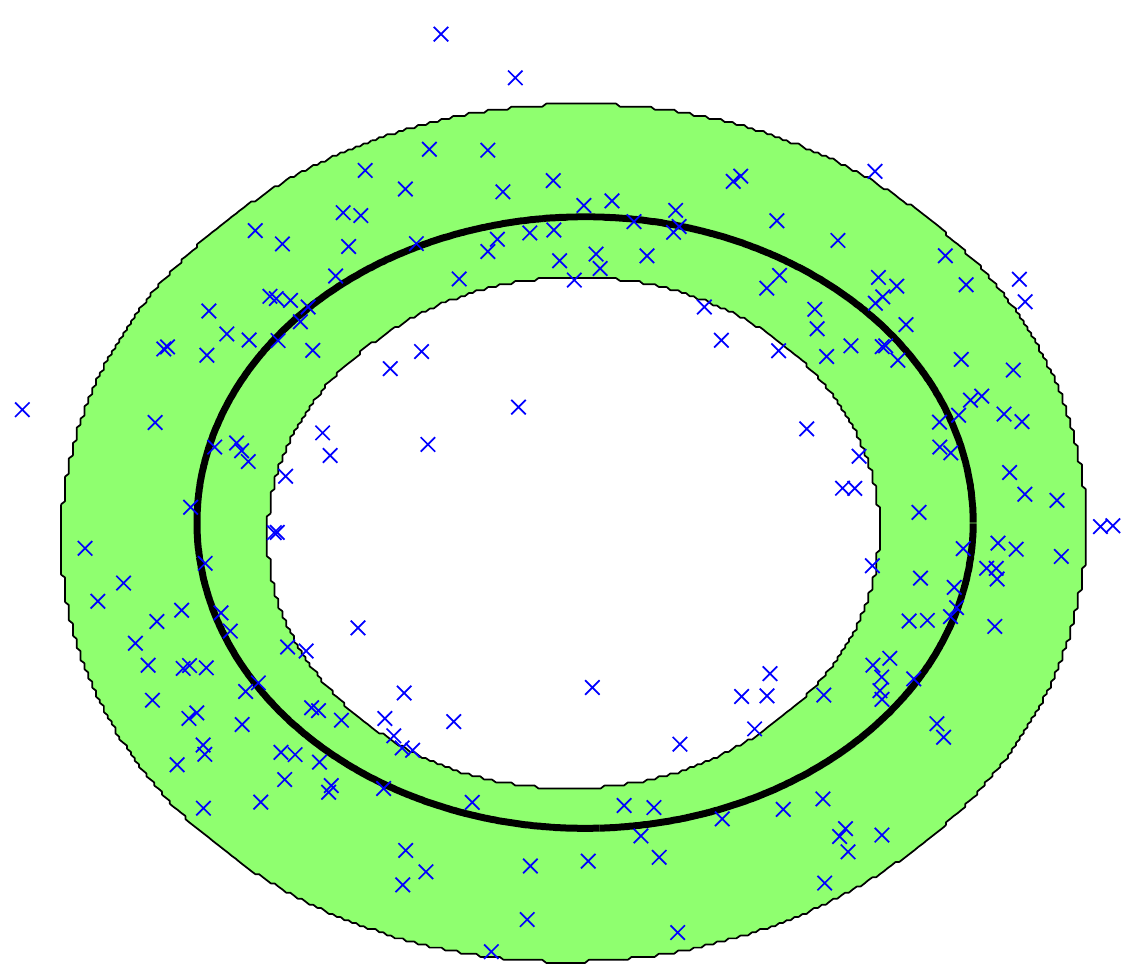}
}
\subfigure[$\sigma = 2.3$]{%
\includegraphics[height=0.12\textwidth]{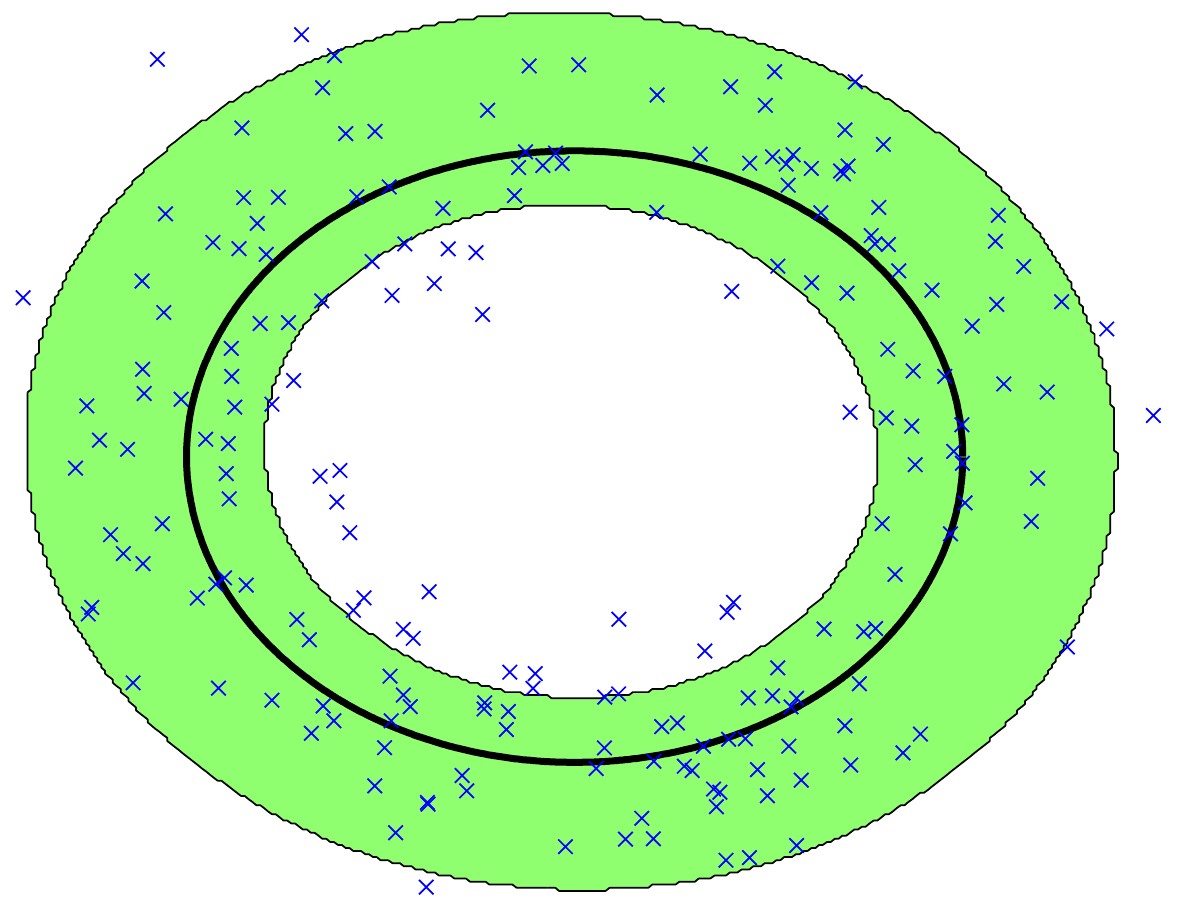}
}
\subfigure[$\sigma = 2.6$]{%
\includegraphics[height=0.12\textwidth]{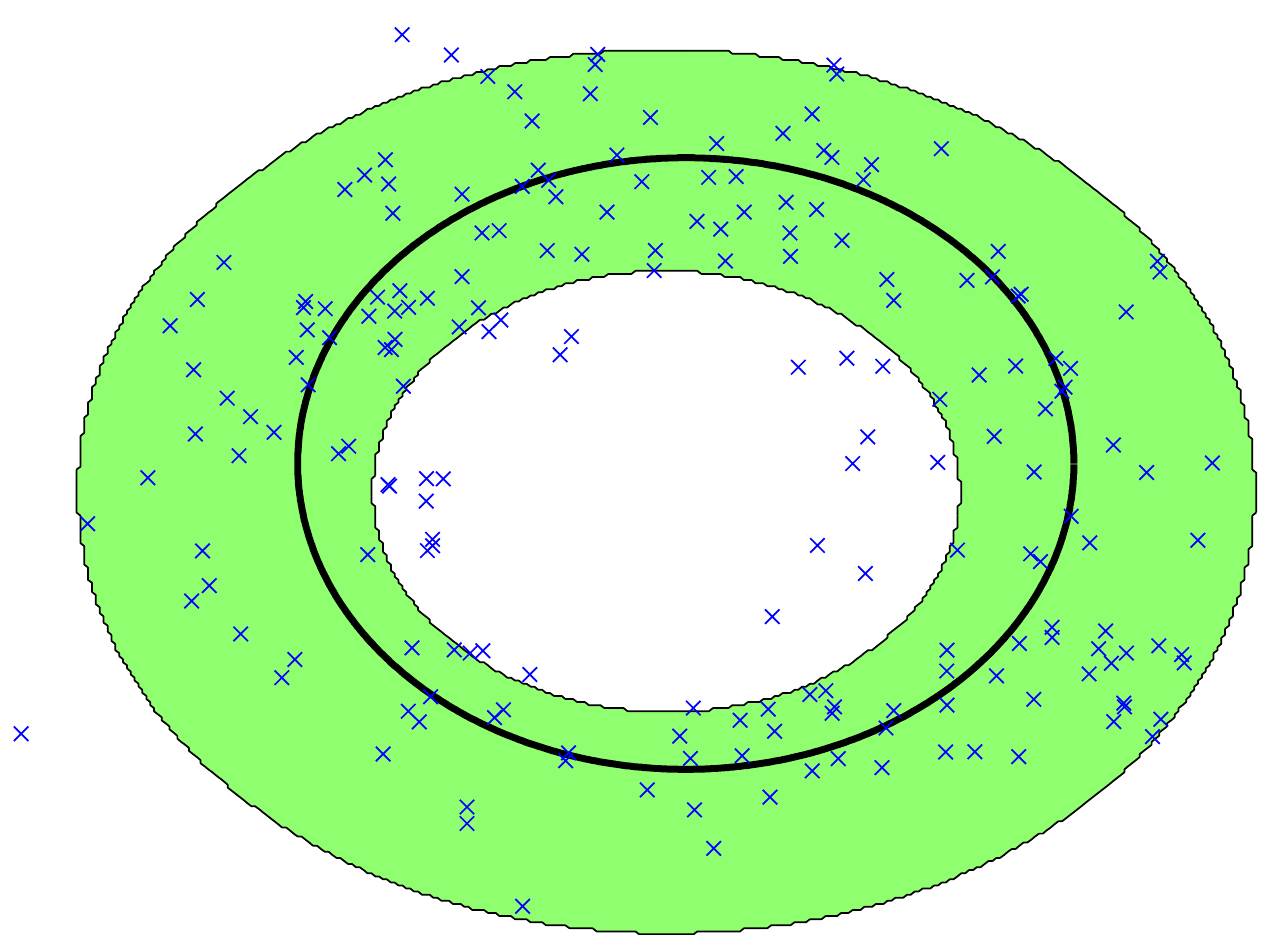}
}
\end{center}
\caption{%
\label{fig:circle}
Learning a circle of radius $10$ from $200$ random points with $N(0,\sigma)$
noise.  Blue x's are the training points.  The true circle is shown in black.
The green region is the $[-s/10,s/10]$ level set of the kernel border basis element with smallest singular value $s$.
}
\end{figure*}
The analogous task for generative kernel learning is learning \emph{one} circle from a noisy sample.
Figure \ref{fig:circle} shows that AVICA does this.  In each experiment, we generate $200$ uniform
points on a circle of radius $R=10$, centered at the origin, add $N(0,\sigma)$ and then run $\le 2$-AVICA with
threshold $2\theta\sqrt{2}\sigma^2$, which is what we expect for noise in the feature space.  The width
of the green region in Figure \ref{fig:circle} is scaled by $s/R$, since it is proportional to length of the
generator in feature space and captures the data. However, it is only there for illustrative purposes: the
estimation task is to estimate the \emph{manifold}, not the data, so what is important is not the width of
the green region but that the black circle is in it.

\subsection{Handwritten digits}
We tested AVICA on the MNist handwritten digit recognition data set, to compare it with the AVI class algorithm in~\cite{VCA2013}. Classification was done using the discriminative method described in section~\ref{sec:kernels.disc}; the union of all classes was subsampled to $200$ data points. The class to which a test point was assigned was chosen as the minimizer of the $\ell^1$-norm of the one-vs-all-generative features. As kernels, we used the inhomogenous kernel with $\theta=1/\sqrt{2}$, and the Gauss kernel with a width of $5000$. Thresholding in AVICA was done at the logarithmic mean of the singular value spectrum. For $maxdeg = 1$, that is with purely linear features (in which case IPCA=AVICA), both methods (polynomial and Gauss) achieved an overall misclassification rate of $4.1\%$ with an overall runtime in the order of seconds. Increasing the degree, the size of the subsample, or varying the parameters can lead to lower misclassification rates. However, it is difficult to compare these results to those of~\cite{VCA2013}, since the authors do not disclose how exactly they measure runtime or choose the threshold, therefore we refrain from more detailed comparison and conclude that IPCA and AVICA are already very fast and competitive on handwritten digits for degree $1$.

\section{Discussion}\label{Sec:discussion}

\subsection{Duality of Generative and Discriminative Learning}
Let us close out our theoretical discussion with two critical connections.
The first is between our method and the by-now classical theory of discriminative
learning with kernels. By our discussion on duality in section~\ref{Sec:ideals},
we have implicitly shown the following informal ``theorem'':

\begin{Thm}\label{Thm:learning-duality}
Generative learning with ideals is dual to discriminative learning with
kernels; discriminative learning with ideals is dual to generative learning
with kernels.
\end{Thm}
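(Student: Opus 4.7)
Because the theorem is labelled informal, my plan is less about a new calculation and more about unpacking the four phrases so that each pairing reduces to a duality already proved in Section~\ref{Sec:ideals}. First I would fix precise meanings. \emph{Discriminative kernel learning} produces elements of the decision-function span $\lspan\{k_*(y_i,\mX)\}$, i.e.\ functions of the form $F(\cdot)=\sum_i\alpha_i k(\cdot,y_i)$. \emph{Generative ideal learning} produces elements of the vanishing ideal $\Id(\calS)$, i.e.\ a compact symbolic description of the manifold. \emph{Discriminative ideal learning} produces elements of the interpolation space $\K[\calS]\cong\K[\mX]/\Id(\calS)$, i.e.\ distinguishing coordinates on $\calS$. \emph{Generative kernel learning} estimates which feature-space vectors actually arise from data, i.e.\ the image $\phi_*(\calS)$, whose span is dual to $\K[\calS]$ via Theorem~\ref{Thm:dual}.

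With this dictionary in place, the first half --- generative ideals dual to discriminative kernels --- is essentially Theorem~\ref{Thm:dual-manifold} combined with Theorem~\ref{Thm:dual}. Theorem~\ref{Thm:dual-manifold} states
\[
\Id(\calS)\cap \K[\mX]_* \;=\; \lspan\{k_*(s,\mX): s\in\calS\}^\perp,
\]
and Theorem~\ref{Thm:dual} identifies $\K[\mX]_*$ itself with the dual of feature space through the functions $k_*(y_i,\mX)$. Read one way, learning a single ideal generator $f\in\Id(\calS)$ is exactly learning a decision function that vanishes on the data manifold; read the other way, learning a discriminative decision function means picking out a non-trivial element of the complement of the ideal. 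The two tasks are then literally the same task viewed through opposite sides of a single orthogonality relation.

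For the second half --- discriminative ideals dual to generative kernels --- I would pass to the orthogonal complement, the interpolation space $\K[\calS]_*=(\Id(\calS)\cap\K[\mX]_*)^\perp$, which by Theorem~\ref{Thm:dual-manifold} equals $\lspan\{k_*(s,\mX):s\in\calS\}$. Thus a basis for the space of functions on $\calS$ (the discriminative-in-ideals object) is the same datum as a basis for the image of $\calS$ in feature space (the generative-in-kernels object). Theorem~\ref{Thm:dual-interp} then welds the two halves into a single computational object: the kernel matrix $K=k(X,Y)$ has row-span capturing the interpolation space, and null-space --- via coefficients $\alpha$ with $K\alpha=0$ --- capturing the kernel border basis, so both cross-dualities are witnessed simultaneously by the row/null decomposition of one matrix.

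The main obstacle is not any single calculation but a definitional one: to upgrade the informal statement to a formal theorem, one must choose a category in which the four learning problems and their outputs live, and exhibit the pairing of Equation~\ref{Eq:eval-ip} as a contravariant isomorphism between the generative and discriminative sides. I would therefore frame the final write-up around verifying that the bilinear form $(f,s)\mapsto f(s)=\iprod{f}{k_*(\mX,s)}$ is non-degenerate on the complementary subspaces $\Id(\calS)_*$ and $\K[\calS]_*$, so that the four learning tasks really partition into two genuinely distinct dual pairs, and not two copies of the same pair.
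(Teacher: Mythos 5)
Your proposal is correct and follows essentially the same route as the paper, which offers no formal proof of this informal ``theorem'' beyond appealing to the dualities of Section~\ref{Sec:ideals} (Theorems~\ref{Thm:dual}, \ref{Thm:dual-manifold}, \ref{Thm:dual-interp}) and illustrating the pairings with the SVM/hyperplane and kernel-PCA/IPCA examples. Your explicit dictionary mapping the four tasks onto the orthogonal decomposition of $\K[\mX]_*$ into $\Id(\calS)_*$ and $\K[\calS]_*$, together with your honest remark that a genuinely formal version would require fixing the category in which the learning tasks live, is a faithful --- indeed more careful --- rendering of exactly the argument the paper intends.
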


For example, in the classical discriminative scenario of the
\emph{kernel support vector machine}, the standard
kernel decision function is of the form $F(.)=\sum_{i=1}^D\alpha_i k(.,x_i)$.
In the ideals setting, this is generative learning of the separating hyperplane,
which is a manifold uniquely parametrized by~$F$, interpreted as a polynomial
and element of the \emph{interpolation space}. In \emph{kernel PCA},
generative features are learnt for the data; by using an analogous technique
in the dual, \emph{IPCA} learns the data manifold in a discriminative way,
by separating points on the manifold (the matrix~$X$) from points not on the
manifold (the matrix~$Y$). Moreover, the \emph{noise consistency} guarantees that we obtain for both IPCA
and AVICA are dual to \emph{generalization bounds} that can be obtained
from classical Vapnik-Cher\-vo\-nen\-kis theory.

\subsection{Related Work}

Finally, we briefly discuss the principal strains of related work and the ideas which are rooted there. These are, in the sequence we will discuss them: kernels, manifold learning, algebra in statistics and learning, approximate symbolic methods.
{\bf Kernel methods} form a broad field and have been widely studied in practical
and theoretical context; a detailed overview over the field and its history can,
for example, be found in the ``further reading'' sections of~\cite{Shawe-Taylor04}.
To our knowledge, there is so far no technique or result relating kernel methods to
symbolic computation. The major link to existing literature is through the kernel trick~\cite{Aizerman64} and the reproducing kernel hilbert space duality (see section 2.2.3 of~\cite{Scholkopf02}): the initial statements on algebra-kernel duality can be obtained from RKHS theory by considering polynomial functions formally as elements of the polynomial rings.
{\bf Manifold learning} techniques, such as principal curves~\cite{Hastie84}, LLE~\cite{LLE2000},
or the aforementioned kernel PCA~\cite{KPCA1998} have in common that they assign an embedding of the data into low dimensional space.
This corresponds to learning discriminative features. Our algorithms IPCA and AVICA also learn features which generatively describe the manifold, i.e., explicitly describe where the points lie in the high dimensional data space.
As the discussion in section~\ref{sec:kernels.analysis} explains in greater detail,
these methods can be seen as an extension of kernel PCA in the sense
that they do not only learn the embedding, but also the manifold.
{\bf Algebraic techniques in statistics} have been a recurring topic since the
advent of algebraic statistics, for an overview see~\cite{Pis02,Stu02,Gib10,Stu10}.
However, the results in algebraic statistics are not directly applicable to a
learning or data related context, since the field is predominantly concerned
with understanding algebraically structured models and not estimating them from data.
On the other hand, there are seemingly unrelated scenarios where specific algebraic
structures have explicitly been used for estimation and learning in particular
scenarios, e.g.~\cite{GPCA05,Kondor08,Kir12JMLR,KT2013NIPS}.
A learning theory built on polynomials and ideals has been outlined in the
appendix of~\cite{Kir12AISTATS}.
{\bf Approximate symbolic computation} techniques can be traced back to Corless et al~\cite{Cor95} who proposed the use of singular value decomposition (SVD) for polynomial systems, and the work of Stetter~\cite{Ste04} who pioneered a more
general numerical view. The first algorithms which use SVD to estimate an approximate
vanishing ideal numerically are those of Heldt et al~\cite{AVI}, which uses
border bases and a numerically stable variant of term orderings, and Sauer~\cite{Sauer07}, which uses a coordinate independent degree-increasing strategy to compute homogenous bases; both algorithms can be considered as variations on the same idea set and yield
an approximate version of the exact symbolic Buchberger-Möller algorithm~\cite{Moller82}.
The homogenous variant in~\cite{Sauer07} has recently reappeared under the
name ``Vanishing Components Analysis''~\cite{VCA2013} in the machine
learning community. {\bf AVICA} can be seen as a kernelization of these AVI-class
algorithms: it integrates the idea of the compact order ideal/border basis
representation in~\cite{AVI} (interpolation space/kernel border basis) with
the degree-greedy strategy and homogenous coordinate independence of~\cite{Sauer07}
into our kernel based algorithm through the concept of kernel-ideal-duality.

\subsection{Conclusion and Outlook}
In this paper, we have exposed an intricate duality between kernels and commutative
algebra, between ideals and manifolds, between kernel methods and symbolic algebraic
methods, between generative and discriminative learning. We have outlined how a
statistical learning theory in this new ideal-kernel-duality setting can be obtained,
and we have described how the duality can be exploited in general for learning
explicit, generative structures with kernels. We have demonstrated, theoretically
and competitively on real-world-data, how the duality can be used to construct
a novel type of algorithm, AVICA, which simultaneously extracts discriminative
and generative components from the data.\\
In the light of this, the whole field of statistical data analysis and machine
learning stands open to a plethora of new methods following this conceptual regime.
\section*{Acknowledgments}
LT is supported by the European Research Council under the European Union’s Seventh Framework
Programme (FP7/2007-2013) / ERC grant agreement no 247029- SDModels.  This research was
carried out at MFO, supported by FK's Oberwolfach Leibniz Fellowship.

\bibliographystyle{plainnat}

\end{document}